\documentclass[runningheads]{llncs}
\usepackage{graphicx}
\usepackage{hyperref}
\usepackage{times}
\usepackage{xcolor}
\usepackage{amsmath,amssymb,bbm}
\usepackage{enumitem}
\usepackage{booktabs}  % professional-quality tables
\usepackage[nameinlink]{cleveref}
\usepackage{algorithm}
\usepackage{algpseudocode}

\usepackage{mathrsfs}

\newcommand{\R}{\mathbb R}

\newcommand{\E}{\mathbb E}
\newcommand{\Pbb}{\mathbb P}
\newcommand{\one}{\mathbbm{1}}

\newcommand{\Fcal}{\mathcal{F}}
\newcommand{\Rcal}{\mathscr{R}}

\newcommand{\Lcal}{\mathcal{L}}
\newcommand{\Var}{\mathbb{V}}

\newcommand{\blind}{1}%0=blind, 1=un-blind

\begin{document}
\title{Softmax gradient policy for variance minimization and risk-averse multi armed bandits}
\titlerunning{Variance minimization bandits}
% If the paper title is too long for the running head, you can set
% an abbreviated paper title here
\if1\blind
{%blind version
\author{
Gabriel Turinici\inst{}\orcidID{0000-0003-2713-006X} 
}%
\authorrunning{G. Turinici}
\institute{CEREMADE, \\  Universit\'e Paris Dauphine - PSL, CNRS, Paris, France \\
		\email{gabriel.turinici@dauphine.fr}\\
		\url{https://turinici.com}  \\ \ \\December 2025}
}\fi%end non blind version
\if0\blind{
\author{anonymized version\institute{anonymous institute}}
}\fi

\maketitle              % typeset the header of the contribution
\begin{abstract}
Algorithms for the Multi-Armed Bandit (MAB) problem play a central role in sequential decision-making and have been extensively explored both theoretically and numerically. While most classical approaches aim to identify the arm with the highest expected reward, we focus on a risk-aware setting where the goal is to select the arm with the lowest variance, favoring stability over potentially high but uncertain returns. To model the decision process, we consider a softmax parameterization of the policy; we  
propose a new algorithm to select the minimal variance (or minimal risk) arm and prove its convergence under natural conditions. 
The algorithm constructs an unbiased estimate of the objective by using two independent draws from the current's arm distribution.
We provide numerical experiments that illustrate the practical behavior of these algorithms and offer guidance on implementation choices. The setting also covers general risk-aware problems where there is a trade-off between maximizing the average reward and minimizing its variance. 
\keywords{Policy Gradient \and Regularized Policy Gradients \and Proximal Policy Optimization \and Risk-aware Multi Armed Bandit 
\and Variance minimization Multi Armed Bandit \and softmax Multi Armed Bandit \and distributional reinforcement learning (DRL)}
\end{abstract}
\section{Introduction}

Decision-making under uncertainty is a fundamental challenge in many domains, including finance, healthcare, and online recommendation systems. The Multi-Armed Bandit (MAB) framework \cite{contextual_mab,slivkins_introduction_mab_2019} provides a principled approach to balancing exploration and exploitation in such settings. Traditionally, MAB algorithms have focused on maximizing expected rewards, which is appropriate when risk considerations are secondary. However, in many real-world applications, the variability of outcomes can be as critical as their average performance, especially in environments where stability and reliability are essential. This has motivated the development of risk-aware bandit models, where the objective incorporates measures of uncertainty or risk rather than focusing solely on the mean.
Our work adopts this perspective by investigating strategies that prioritize arms with minimal variance, reflecting a preference for consistent outcomes over potentially high but volatile returns. This approach aligns with broader trends and advancements in reinforcement learning
\cite{mnih_human_level_rl_2015,silver_rl_mastering_go_2016,openai2024gpt4,rl_recommender22,bojarski2016end_rl_cars,RL_medecine1,RL_medecine2}, particularly distributional reinforcement learning~\cite{bellemare2023distributional,bellemare2017distributional,dabney2017qr,dabney2018iqn,rowland2018analysis,wiltzer2024multivariate,antonetti2025gaussian}, which emphasizes modeling the entire distribution of returns rather than just their expectation. By considering the distributional properties of rewards, such methods enable richer decision-making criteria that account for risk, variability, and tail behavior. In this study, we explore how these ideas can be adapted to the bandit setting, using a softmax parameterization \cite[section 2.8 and chap. 13]{sutton_reinforcement_2018} to guide arm selection in a smooth and theoretically grounded manner.

The paper is organized as follows : we present a short literature review in the remainder of this section and then give the main framework in Section \ref{sec:notations}. We give theoretical results in Section \ref{sec:cv_proof}  and empirical ones in Section~\ref{sec:numerical}. We conclude with \Cref{sec:conclusion}.

\subsection{Brief literature review}

The concept of \emph{distributional reinforcement learning (DRL)} emerged as an extension of classical RL, which traditionally models the expected return as a scalar value. Instead, DRL focuses on learning the entire distribution of returns, enabling richer decision-making and risk-sensitive policies. The seminal work by \cite{bellemare2017distributional} introduced the distributional perspective through the Categorical DQN (C51) algorithm, which approximates the return distribution using a categorical representation. This approach demonstrated improved performance and stability compared to standard value-based methods. Building on this, \cite{dabney2017qr} proposed Quantile Regression DQN (QR-DQN), replacing categorical approximations with quantile-based modeling, which offers greater flexibility and accuracy. Later, \cite{dabney2018iqn} introduced Implicit Quantile Networks (IQN), a method that learns the full quantile function implicitly, allowing for dynamic risk-sensitive policies and better approximation of the return distribution.

The theoretical foundations of these methods were further analyzed by \cite{rowland2018analysis}, who examined the convergence properties of C51 and the role of the projection operator. More recent work has extended DRL beyond scalar rewards: \cite{wiltzer2024multivariate} developed a framework for multivariate distributional RL, addressing settings with vector-valued rewards, while \cite{antonetti2025gaussian} explored modeling return distributions using Gaussian mixtures, providing new convergence guarantees and improved flexibility. For a comprehensive treatment of the field see the textbook by Bellemare et al. \cite{bellemare2023distributional}.

\subsubsection{Risk-Aware Multi-Armed Bandits}

While distributional reinforcement learning (DRL) focuses on modeling the full distribution of returns in sequential decision-making, similar concerns arise in the simpler setting of Multi-Armed Bandits (MAB). Classical MAB algorithms aim to maximize expected reward, which is appropriate when risk considerations are negligible. However, in many real-world applications such as finance or safety-critical systems, variability and tail risk are critical factors. This has motivated the development of risk-aware bandit models, where the objective incorporates measures beyond the mean, such as variance, mean-variance trade-offs, quantiles, or Conditional Value-at-Risk (CVaR). Recent works have explored these risk-sensitive criteria and even distributional approaches in bandits, reflecting a broader trend toward algorithms that account for uncertainty and stability in decision-making.

Early work by \cite{sani2012risk} introduced risk-aversion in bandits by considering the mean-variance criterion $\text{Variance} - \lambda \cdot \text{Mean}$ and proposed algorithms such as MV-LCB and ExpExp, which adapt UCB-style confidence bounds to account for risk. Building on this, \cite{vakili2016risk} provided a refined analysis of these algorithms and extended the framework to include both mean-variance and Value-at-Risk objectives \cite{vakili2015mean}. In a more general setting, \cite{zimin2014generalized} considered objectives of the form $f(\mu,\sigma^2)$ and proposed the $\phi$-LCB algorithm, which leverages Chernoff-Hoeffding bounds to achieve desirable performance guarantees. Another line of work by \cite{galichet2013exploration} focused on safety by introducing MARAB, an algorithm that uses CVaR as the arm quality measure to limit exploration of risky arms. More recently, \cite{huo2017risk} studied risk-aware bandits in the context of portfolio selection, again relying on UCB-like strategies to balance exploration and risk. Finally, \cite{varianceoptimal2025} explored variance-optimal arm selection, focusing on regret minimization and best-arm identification when the objective is to select the arm with the highest variance, using UCB-based approaches.

These contributions demonstrate that most risk-aware bandit algorithms rely on UCB-style confidence bounds or LCB variants to incorporate risk measures. In contrast, our work adopts a different perspective: we consider a \emph{softmax parameterization} of the policy in a risk-aware setting, specifically targeting arms with minimal variance. To the best of our knowledge, this approach has not been explored in the existing literature and provides a novel way to integrate risk-awareness into bandit algorithms. Furthermore a fundamental difference with other algorithms is that in our vision we do not keep variance related statistics but compute it on the fly without relying on an estimate for the average, at it will be described in \Cref{alg:variance-mab} below.

\subsubsection{Convergence of MABs}

Recent work by Mei et al.~\cite{mei_global_2020,mei2023stochastic} established that exact-gradient policy gradient methods with softmax parameterization converge at rate $O(1/t)$, and that entropy regularization can accelerate convergence. Surprisingly, they also show that stochastic gradients can converge under constant learning rates, albeit requiring many iterations (see also \cite{baudry2025does}). Their analysis builds on classical stochastic approximation theory initiated by Robbins and Monro~\cite{robbins_stochastic_1951}, with comprehensive treatments in \cite{chen_stochastic_2002} and recent extensions to non-convex settings~\cite{sgd_conv_non_cx20,mertikopoulos_almost_2020}. Additional results address biased gradients~\cite{ajalloeian2021convergencesgdbiasedgradients}, convex objectives~\cite{regularizedSGD_convex}, and entropic regularization~\cite{weissmann2025almost}. These studies show that decaying regularization preserves asymptotic behavior while improving early stability. However, applying these results to MAB requires care: assumptions such as uniqueness of critical points, boundedness, and Lyapunov conditions~\cite{mertikopoulos_almost_2020} often fail in diverse settings. Despite these challenges, our approach adapts this theoretical foundation to variance minimizing and risk-aware bandits, combining estimates and convergence arguments not previously applied in this context.

\section{Softmax parameterized policy gradient Multi Armed Bandit} \label{sec:notations}

We consider the Multi-Armed Bandit (MAB) problem in the sense of \cite{sutton_reinforcement_2018}, where a learner repeatedly chooses among $k$ alternatives, called \emph{arms}, indexed by $a \in \{1,2,\dots,k\}$. Each arm $a$ is associated with a reward distribution, from which rewards are sampled when the arm is selected. We denote by $\mu_a$ the mean reward and by $\sigma_a^2$ the variance of the reward distribution for arm $a$. Thus, if $R(a)$ is the random reward from arm $a$, then:
\begin{align}
\mathbb{E}[R(a)] &= \mu_a, \\
\Var[R(a)] &= \sigma_a^2,
\end{align}
where $\E[\cdot]$ and $\Var[\cdot]$ are the average and variance operators respectively.

Classical formulations often assume $\sigma_a^2 = 1$ for all arms, but here we allow heterogeneous variances, which is essential for risk-aware decision-making. In the classical setting, at time step $t$, the learner selects an arm $A_t$ sampled using some policy $\Pi_t \in \{ \pi \in \R^k : \sum_a \pi_a = 1, \pi_a \ge 0\}$ and observes a reward $R_t$ drawn from the corresponding distribution, which we will denote $R_t \sim R(A_t)$.

The policy $\Pi$ is parameterized by a preference vector $H \in \mathbb{R}^k$  through the \emph{softmax} function:
\begin{equation}
\Pi_H(a) = \frac{e^{H(a)}}{\sum_{b=1}^k e^{H(b)}}.
\label{eq:softmax_policy}
\end{equation}

\subsection{Variance minimizing Multi-Armed Bandit}

While the standard objective is to maximize cumulative expected reward, our risk-aware setting aims to favor arms with lower variability. Specifically, we  proceed as follows. 
We begin by defining the target functional in terms of the reward distribution under a policy $\Pi$:
\begin{align}
J(\Pi) = \Var_{A \sim \Pi}[R(A)],
\label{eq:functional_pi_0}
\end{align}

Under the softmax parameterization, the optimization problem becomes:
\begin{equation}
\text{minimize}_{H \in \mathbb{R}^k} \; \mathcal{L}(H),
\label{eq:problem_mab_as_minimization}
\end{equation}
where $\mathcal{L}(H)$ is defined by substituting $\Pi_H$ into \eqref{eq:softmax_policy}:
\begin{align}
\mathcal{L}(H) &= \sum_{a=1}^k \Pi_H(a)  \sigma_a^2 .
\label{eq:functional_to_min_var}
\end{align}
The difficulty with respect to  
 the classical MAB  \cite[section 2.8]{sutton_reinforcement_2018} lies in the
 fact that the computation of the variance cannot be obtained from a single reward observed after the arm selection. More aggregated quantities are needed. 
%Assuming that somehow $\mathcal{L}(H)$ can be put under the form of an expectation:
Assuming $\mathcal{L}(H)$ admits an expectation representation:
\begin{align}
\Lcal(H) =  
\mathbb{E}_{A \sim \Pi_H}[\Rcal(A)],
\label{eq:functional_H_rcal}
\end{align}
then the update of the preferences is done with the following formula:
\begin{equation}
H_{t+1}(a) =H_t(a) - \rho_t  g_t
, \ a=1,..., k,
\label{eq:update_Ht_pg}
\end{equation}
where
\begin{equation}
	g_t(a):=(\Rcal_t- \bar{\Rcal}_t) (\one_{a=A_t}- \Pi_{H_t}(a)),
	\label{eq:def_gt}
\end{equation}
and $\bar{\Rcal}_t$ is some baseline, usually taken as the average of previous values $\Rcal_s$, $s< t$.

It remains to be made precise what is our choice of $\Rcal$: for each arm $A_t$ selected according to $\Pi_t$ we will select two rewards $R_t$ and $R'_t$ from the distribution of the arm $A_t$ and define:
\begin{align}
    \Rcal_t = \frac{1}{2}(R_t - R_t')^2.
\label{eq:rcal_reward_variance}
\end{align}
 The \Cref{alg:variance-mab} formalizes this procedure.

\begin{algorithm}[H]
\caption{Policy gradient Volatility minimizing Bandit with Mini-batch Sampling}
\begin{algorithmic}[1]
\Require number of arms $k$; batch size $\ell \in \mathbb{N}$; learning rate schedule $\{\rho_t\}_{t\ge 1}$; initialize $H_1 \in \mathbb{R}^k$, 
$\bar{\Rcal}_1=0$
\Require for each arm $a$, rewards $R_{a}$ are drawn from an unknown distribution with (unknown) mean $\mu_a$ and (unknown) variance $\sigma_a^2$
\For{$t = 1,2,\dots$}
    \State \textbf{Sample arm} $A_t \sim \Pi_{H_t}$
    \State \textbf{Draw} 2 i.i.d. rewards from arm $A_t$: $R_{t}$, $R'_{t} \sim R(A_t)$
    \State Evaluate composite reward:
    $\Rcal_t$ from \Cref{eq:rcal_reward_variance}
    \State \textbf{Update preferences (gradient descent)} with
    \Cref{eq:update_Ht_pg,eq:def_gt}
    \State Update baseline to be used in the next step:
    \begin{align}
     & \text{ future baseline: }
%    \bar{\Rcal}_{t+1} = \left(1-\frac{1}{t+1}\right)\bar{\Rcal}_{t} + \frac{1}{t+1}\cdot \Rcal_t &
    \bar{\Rcal}_{t+1} = \bar{\Rcal}_{t}+ \frac{\Rcal_t-\bar{\Rcal}_{t}}{t+1}  &
    \label{eq:baseline_var}
\end{align}
\EndFor
\end{algorithmic}
\label{alg:variance-mab}
\end{algorithm}

\noindent
We will see in \Cref{sec:theoretical_var} that under mild hypothesis the \Cref{alg:variance-mab} converges to the expected outcome (lowest variance arm).

\subsection{Risk-Aware Multi-Armed Bandit Formulation}

We discuss now a more general setting of risk-aware MAB where the 
goal is, like in the original MAB, to maximize reward but here the added requirement is to also minimize the uncertainty (variance). This leads to the definition of the
target functional in terms of the reward distribution under a policy $\Pi$ as follows: 
\begin{align}
J_r(\Pi) =  
\lambda_\sigma \Var_{A \sim \Pi}[R(A)]
+
\lambda_\mu  \mathbb{E}_{A \sim \Pi}[R(A)],
\label{eq:functional_J_risk_version}
\end{align}
with parameters 
 $\lambda_\mu\le0\le \lambda_\sigma$ describing the relative importance of reward and uncertainty. The particular situation $\lambda_\mu=0$,  
$\lambda_\sigma=1$ means that the goal is to find the arm with the lowest variance.
Under the softmax parameterization, the optimization problem becomes:
\begin{equation}
\text{minimize}_{H \in \mathbb{R}^k} \; \mathcal{L}_r(H),
\label{eq:problem_mab_as_minimizationLr}
\end{equation}
where $\mathcal{L}_r(H)$ is defined by substituting $\Pi_H$ into \eqref{eq:softmax_policy}:
\begin{align}
\Lcal_r(H) &= \sum_{a=1}^k \Pi_H(a) \big( \lambda_\sigma \sigma_a^2  + \lambda_\mu \mu_a \big).
\label{eq:functional_to_min_risk}
\end{align}
Building on the \Cref{alg:variance-mab} we propose here  the 
procedure described in \Cref{alg:risk-aware-mab} adapted to the problem
in \Cref{eq:problem_mab_as_minimizationLr}.

\begin{algorithm}[H]
\caption{Policy gradient Risk-Aware Bandit with Mini-batch Sampling}
\label{alg:risk-aware-mab}
\begin{algorithmic}[1]
\Require number of arms $k$; batch size $\ell \in \mathbb{N}$; 
%learning rate $\rho>0$; 
learning rate schedule $\{\rho_t\}_{t\ge 1}$; 
initialization $H_1 \in \mathbb{R}^k$, 
$\bar{\Rcal}_1=0$
\Require for each arm $a$, rewards $R_{a}$ are drawn from an unknown distribution with (unknown) mean $\mu_a$ and (unknown) variance $\sigma_a^2$
\For{$t = 1,2,\dots$}
    \State \textbf{Sample arm} $A_t \sim \Pi_{H_t}$
    \State \textbf{Draw} $\ell$ i.i.d. rewards from arm $A_t$: $R_{t,1}, R_{t,2}, \dots, R_{t,\ell} \sim R(A_t)$
    \State \textbf{Compute statistics} on the mini-batch:
    \begin{align} 
    & \text{ empirical mean: }  
     \hat{\mu}_t := \dfrac{1}{\ell}\sum_{j=1}^{\ell} R_{t,j}   
    \label{eq:empirical_mean} &\\
    & \text{ empirical variance: } 
    \hat{\sigma}_t^2 := \dfrac{1}{\ell-1}\sum_{j=1}^{\ell} \big(R_{t,j} - \hat{\mu}_t\big)^2 
    \label{eq:empirical_var} &\\
     & \text{ composite reward: }
    \Rcal_t =  \lambda_\sigma \hat{\sigma}^2_{t}+\lambda_\mu \hat{\mu}_{t} &
    \label{eq:composite_rew_risk_version}
    \\
     & \text{ future baseline: }
    \bar{\Rcal}_{t+1} = \bar{\Rcal}_{t}+ \frac{\Rcal_t-\bar{\Rcal}_{t}}{t+1}  &
    \label{eq:baseline}
    \end{align}   
    \State \textbf{Update preferences (gradient descent)} with 
    \Cref{eq:update_Ht_pg,eq:def_gt}
\EndFor
\end{algorithmic}
\end{algorithm}

%{
%\color{red}
%put also the naive version (accumulate the M2 and M1 moments, estimate variance on the fly) and test w/r to it ?
%theoretical properties of the naive version ?
%test on variance only
%
%compare with number of samples i.e. take 2 samples for our case, 1 for the other but start both from biased ...
%}

\noindent
The theoretical properties of both algorithms are analyzed in the next section.

\section{Theoretical convergence results} \label{sec:cv_proof}

We first prove that the update formula \eqref{eq:update_Ht_pg} is a stochastic algorithm {\it \`a la} Robins and Monro \cite{robbins_stochastic_1951}. Denote $\Fcal_t$ the filtration constructed with information available up to time step $t$. Note that $A_t$ is independent of $\Fcal_t$.

%We introduce the hypothesis:
%\begin{equation}
%	\text{ there exists a constant } C_m>0 \text{ such that~: }
%	\E [ R(a)^4 ] \le C_m, \forall a \le k.
%	\label{eq:hyp_fourth_moment}
%\end{equation}

\begin{lemma} For both \Cref{alg:variance-mab,alg:risk-aware-mab} 
we have:
\begin{equation}
	\E[\one_{a=A_t}|\Fcal_t] = \Pi_{H_t}(a).    
	\label{eq:filtration_ft_at}
\end{equation}
Moreover for \Cref{alg:variance-mab}
\begin{equation}
\E    \left[ \left. g_t
\right|\Fcal_t \right] =   \left. \nabla_H \Lcal(H)\right|_{H=H_t},
\label{eq:non_biased_gradient_var}
\end{equation}
while for
\Cref{alg:risk-aware-mab}
\begin{equation}
\E    \left[ \left. g_t
\right|\Fcal_t \right] =   \left. \nabla_H \Lcal_r(H)\right|_{H=H_t},
\label{eq:non_biased_gradient_risk}
\end{equation}
which means that $g_t$ in update \Cref{eq:update_Ht_pg} is indeed an un-biased estimation of the true gradient of $\nabla_H \Lcal(H)$ ( \Cref{alg:variance-mab}) or
 $\nabla_H \Lcal_r(H)$ ( \Cref{alg:risk-aware-mab}). 
%\begin{equation}
%\E [ \|g_t \|^2 ] \le C_{q_*} 
%\label{eq:bounded_grad}
%\end{equation}
\label{lemma:unbiased_grad}    
\end{lemma}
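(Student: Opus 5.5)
The plan is to condition on $\Fcal_t$, under which $H_t$ and the baseline $\bar{\Rcal}_t$ are known (they are built from $\Rcal_s$, $s<t$). The fresh randomness at step $t$ is then only the arm $A_t \sim \Pi_{H_t}$ and the rewards drawn from that arm. In that reading, \eqref{eq:filtration_ft_at} holds by construction: the arm is sampled from $\Pi_{H_t}$ with randomness independent of the past, so $\Pbb(A_t=a|\Fcal_t)=\Pi_{H_t}(a)$.

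The first substantive step is to compute $\E[\Rcal_t | \Fcal_t, A_t=a]$ for each algorithm. For \Cref{alg:variance-mab}, $R_t,R'_t$ are i.i.d. copies of $R(a)$, so
\[
\E\big[\tfrac12(R_t-R'_t)^2\big]=\tfrac12\big(2\E[R(a)^2]-2\mu_a^2\big)=\sigma_a^2 .
\]
No estimate of the mean is needed here, which is the point of using two draws. For \Cref{alg:risk-aware-mab}, the standard unbiasedness of the empirical mean and of the $(\ell-1)$-normalized empirical variance (assuming $\ell\ge 2$) gives $\E[\Rcal_t|\Fcal_t,A_t=a]=\lambda_\sigma\sigma_a^2+\lambda_\mu\mu_a=:r_a$, and in the first case $r_a=\sigma_a^2$. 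Throughout I assume finite second moments.

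Next I compute the gradient. From \eqref{eq:softmax_policy} we have $\partial_{H(b)}\Pi_H(a)=\Pi_H(a)(\one_{a=b}-\Pi_H(b))$. Hence
\[
\partial_{H(b)}\Lcal_r(H)=\sum_a r_a\Pi_H(a)(\one_{a=b}-\Pi_H(b)),
\]
which is the usual score-function identity, and the same formula holds for $\Lcal$.

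Then I take the conditional expectation of $g_t(b)$ by the tower property, conditioning first on $A_t$:
\[
\E[g_t(b)|\Fcal_t]=\sum_a \Pi_{H_t}(a)\,(r_a-\bar{\Rcal}_t)(\one_{a=b}-\Pi_{H_t}(b)).
\]
The baseline term drops out because $\bar{\Rcal}_t$ is $\Fcal_t$-measurable and
\[
\sum_a\Pi_{H_t}(a)(\one_{a=b}-\Pi_{H_t}(b))=\Pi_{H_t}(b)-\Pi_{H_t}(b)=0 .
\]
The remaining term is exactly the gradient computed above, which gives \eqref{eq:non_biased_gradient_var} and \eqref{eq:non_biased_gradient_risk}.

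The main point of care is measurability: one must check that the rewards at step $t$ are conditionally independent of $\Fcal_t$ given $A_t$ and have the law of $R(A_t)$, and that the baseline update does not use $\Rcal_t$ before $g_t$ is formed. This requires reading $\bar{\Rcal}_t$ as computed from $\Rcal_s$ with $s<t$, which matches the ordering in both algorithms, where the baseline is updated after the gradient (or is labelled ``future baseline'').
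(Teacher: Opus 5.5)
Your proposal is correct and follows essentially the same route as the paper. You condition on $\Fcal_t$, use the tower property through $A_t$ to replace $\Rcal_t$ by $\lambda_\sigma\sigma_a^2+\lambda_\mu\mu_a$, kill the baseline term because $\bar{\Rcal}_t$ is $\Fcal_t$-measurable and $\E[\one_{a=A_t}-\Pi_{H_t}(a)|\Fcal_t]=0$, and match the result with the softmax derivative $\Pi_H(a)(\one_{a=b}-\Pi_H(b))$. The differences are minor: you check $\E[\tfrac12(R_t-R'_t)^2]=\sigma_a^2$ directly instead of treating \Cref{alg:variance-mab} as the case $\ell=2$, $\lambda_\mu=0$, $\lambda_\sigma=1$ of \Cref{alg:risk-aware-mab}, and you derive \eqref{eq:filtration_ft_at} from the conditional law of $A_t$ given $\Fcal_t$, which is more accurate than the paper's literal claim that $A_t$ is independent of $\Fcal_t$.
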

\begin{proof}
To prove \Cref{eq:filtration_ft_at} it is enough to remark that since $A_t$ is independent of $\Fcal_t$ we have 
\begin{align}
\E[\one_{a=A_t}|\Fcal_t] = \E[\one_{a=A_t}]= \Pbb[A_t]= \Pi_{H_t}(a).    
\end{align}
\noindent Note that assertion 
\Cref{eq:non_biased_gradient_var} is a particular case of 
\Cref{eq:non_biased_gradient_risk} for $\lambda_\mu=0$, $\lambda_\sigma=1$ and 
$\ell=2$. It remains to prove \Cref{eq:non_biased_gradient_risk}.
First we compute the term involving $\bar{\Rcal}_t$:
\begin{align}& 
\E[ \bar{\Rcal}_t (\one_{a=A_t}- \Pi_{H_t}(a))|\Fcal_t] = 
\bar{\Rcal}_t \E[ \one_{a=A_t}- \Pi_{H_t}(a)|\Fcal_t] 
\nonumber \\&
\bar{\Rcal}_t  \left(  \E[ \one_{a=A_t}|\Fcal_t] 
- \Pi_{H_t}(a) \right) = 0.
\label{eq:rtbar_proof_nobias}
\end{align}
Denote now $\Fcal_t^+$ the filtration $\Fcal_t$ enlarged with $A_t$. Then
with the notations 
\Cref{eq:empirical_mean}  and \Cref{eq:empirical_var} 
we have 
\begin{align}& 
\E[ {\Rcal}_t \one_{a=A_t}|\Fcal_t] = 
\E[ \E[ {\Rcal}_t \one_{a=A_t}|\Fcal_t^+]|\Fcal_t]
=\E[ (\lambda_\sigma {\sigma}_a^2 + \lambda_\mu {\mu}_a ) \one_{a=A_t}|\Fcal_t]
\nonumber \\ &
=(\lambda_\sigma {\sigma}_a^2 + \lambda_\mu {\mu}_a ) \Pi_{H_t}(a).
\label{eq:rt1a_proof_nobias}
\end{align}
On the other hand 
\begin{align}& 
\E[ {\Rcal}_t \Pi_{H_t}(a)|\Fcal_t] = 
\Pi_{H_t}(a)\E[ {\Rcal}_t |\Fcal_t] = 
\Pi_{H_t}(a) \sum_b \E[ {\Rcal}_t \one_{b=A_t}|\Fcal_t]
\nonumber \\ &
= \sum_b (\lambda_\sigma {\sigma}_b^2 + \lambda_\mu {\mu}_b ) \Pi_{H_t}(b)\Pi_{H_t}(a) .
\label{eq:rtpi_proof_nobias}
\end{align}
where for the last equality we used the same technique as in the previous one.
Using the derivation rule for the softmax:
\begin{align}
\nabla_{H_b} \Pi_H(a) = \Pi_H(a) (  \one_{b=a} - \Pi_H(b))  
\end{align}
 together with
\Cref{eq:rtbar_proof_nobias,eq:rt1a_proof_nobias,eq:rtpi_proof_nobias}
we obtain
\begin{align}
&    \E[ g_t(a)|\Fcal_t] =
  \E[ (\Rcal_t- \bar{\Rcal}_t) (\one_{a=A_t}- \Pi_{H_t}(a))|\Fcal_t] 
\nonumber \\ &
= \sum_b \Pi_{H_t}(b)(\lambda_\sigma {\sigma}_b^2 + \lambda_\mu {\mu}_b )\cdot (  \one_{b=a} - \Pi_{H_t}(a))  
\nonumber \\ &
= \sum_b \nabla_{H_a} \Pi_{H_t}(b) (\lambda_\sigma {\sigma}_b^2 + \lambda_\mu {\mu}_b )
= \left. \nabla_H \Lcal_r(H)\right|_{H=H_t},
\end{align}
which is the conclusion of the Lemma.\end{proof}
We can now state the main result for the \Cref{alg:risk-aware-mab}.

\begin{proposition}
Assume that $\rho_t=\rho$ (constant) and that all outcomes are unique i.e.,:
\begin{align}
    \forall a\neq b, a,b \le k \ : 
    \lambda_\sigma {\sigma}_b^2 + \lambda_\mu {\mu}_b \neq 
    \lambda_\sigma {\sigma}_a^2 + \lambda_\mu {\mu}_a.
    \label{eq:different_qa_qb_hyp}
\end{align}
Denote $a^\star \le k$ the unique index that minimizes 
$ \lambda_\sigma {\sigma}_b^2 + \lambda_\mu {\mu}_b $ among all $b \le k$.
Then
\begin{enumerate}
    \item The  Dirac mass in $a^\star$ denoted $\Pi^\star$ is the unique distribution that minimizes $\Pi \mapsto J_r(\Pi)$.
\item
Assume moreover that the rewards are bounded i.e.,:
\begin{align}
\text{there exists } R_{max} < \infty \text{ such that } \forall a \le k : |R(a)| \le R_{max} \text{ as random variable}. 
\label{eq:Ra_bounded}
\end{align}
Then the distribution $\Pi_{H_t}$ in \Cref{alg:risk-aware-mab} 
converges to $\Pi^\star$ almost surely, i.e.:
\begin{align}
\lim_{t\to \infty}\Pi_{H_t}(a^\star)=1 \ \ a.s. \text{ and } \lim_{t \to \infty }\Pi_{H_t} = \Pi^\star \ a.s.
\label{eq:convergence_risk}
\end{align}
\end{enumerate}
\label{prop:cv_risk}
\end{proposition}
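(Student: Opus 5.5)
Throughout, write $q_a:=\lambda_\sigma\sigma_a^2+\lambda_\mu\mu_a$, $\Pi_t:=\Pi_{H_t}$ and $\bar q(H):=\sum_b \Pi_H(b)q_b=\Lcal_r(H)$.

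\emph{Part 1.} By \Cref{eq:functional_to_min_risk}, $J_r(\Pi)=\sum_a \Pi(a) q_a$ is linear on the simplex. Hence $J_r(\Pi)-q_{a^\star}=\sum_{a\neq a^\star}\Pi(a)(q_a-q_{a^\star})\ge 0$. By \Cref{eq:different_qa_qb_hyp} every coefficient $q_a-q_{a^\star}$ is strictly positive, so equality holds if and only if $\Pi(a)=0$ for all $a\ne a^\star$. This gives both existence and uniqueness of the minimizer $\Pi^\star$.

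\emph{Part 2, setup.} The plan follows the stochastic softmax analysis of Mei et al.~\cite{mei2023stochastic}, adapted using \Cref{lemma:unbiased_grad}.

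First, collect the structural facts.
\begin{itemize}
\item By \Cref{eq:Ra_bounded}, $|\hat\mu_t|\le R_{max}$ and $\hat\sigma_t^2\le C R_{max}^2$. Hence $\Rcal_t$ is uniformly bounded, and so is $\bar\Rcal_t$, being a running average.
\item $\Lcal_r$ is smooth with a globally bounded Hessian, since softmax has bounded derivatives of all orders.
\item The partial derivatives are $\partial_{H(a)}\Lcal_r(H)=\Pi_H(a)\bigl(q_a-\bar q(H)\bigr)$.
\item The key noise estimate is $\E[\|g_t\|^2|\Fcal_t]\le C\bigl(1-\max_a\Pi_t(a)\bigr)$. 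To see this, split on $A_t$. If $A_t=a$, then $\|\one_{\cdot=a}-\Pi_t\|^2\le 2(1-\Pi_t(a))^2+\ldots\le C(1-\Pi_t(a))$. This event has probability $\Pi_t(a)$, so summing gives $C\sum_a\Pi_t(a)(1-\Pi_t(a))\le 2C(1-\max_a\Pi_t(a))$.
\end{itemize}
So the noise vanishes as the policy approaches a vertex. This is what makes a constant $\rho$ admissible.

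\emph{Part 2, main steps.}

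Step (i): descent inequality. The $L$-smoothness of $\Lcal_r$ gives
\[
\E[\Lcal_r(H_{t+1})|\Fcal_t]\le \Lcal_r(H_t)-\rho\|\nabla\Lcal_r(H_t)\|^2+C\rho^2(1-\max_a\Pi_t(a)).
\]

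Step (ii): the preference of $a^\star$. Its expected increment is $-\rho\,\Pi_t(a^\star)(q_{a^\star}-\bar q(H_t))\ge 0$. So $H_t(a^\star)=H_1(a^\star)+(\text{nondecreasing predictable part})+M_t$, where $M_t$ is a martingale with increments bounded by $C\rho$.

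Step (iii): keep $\Pi_t(a^\star)$ away from zero. I will show $\inf_t\Pi_t(a^\star)>0$ a.s., following Mei et al.'s argument that the optimal arm's probability cannot be driven to $0$. Suppose $\Pi_t(a^\star)\to0$ on some event. Then some suboptimal arm $b$, or a set of them, must take the mass. Near that configuration, the suboptimal arm with the largest $q$ among those carrying mass has a strictly negative expected drift of its preference relative to $a^\star$. Meanwhile the quadratic variation of $M_t$ is controlled by $\sum_t(1-\max_a\Pi_t(a))$, which is finite on that event. A supermartingale/martingale convergence argument applied to $H_t(b)-H_t(a^\star)$ then gives a contradiction.

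Step (iv): conclude via a Łojasiewicz inequality. Once $c:=\inf_t\Pi_t(a^\star)>0$, use the non-uniform Łojasiewicz inequality
\[
\|\nabla\Lcal_r(H)\|\ge \Pi_H(a^\star)\bigl(\Lcal_r(H)-q_{a^\star}\bigr)/\sqrt{k}.
\]
Also note $1-\max_a\Pi\le 1-\Pi(a^\star)\le (\Lcal_r-q_{a^\star})/\min_{b\ne a^\star}(q_b-q_{a^\star})$. Plugging both into (i), for $\rho$ small the noise term is dominated by the descent term, at least while $\Lcal_r-q_{a^\star}$ is not tiny. The remaining slack is summable via the Robbins--Siegmund lemma. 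This yields $\Lcal_r(H_t)\to q_{a^\star}$ a.s., i.e.\ $\Pi_t(a^\star)\to1$, which is \Cref{eq:convergence_risk}.

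\emph{Main obstacle.} I expect step (iii) to be the hardest. The gradient vanishes at \emph{every} vertex of the simplex, so the Lyapunov inequality alone cannot exclude convergence to a wrong Dirac mass. The noise estimate and the sign of the drift of $H_t(a^\star)-H_t(b)$ must be combined carefully; this is the point where the hypotheses \Cref{eq:different_qa_qb_hyp} and \Cref{eq:Ra_bounded} are really used. The comparison of noise and descent in step (iv) may also force a smallness condition on $\rho$, which I would try to avoid exactly as in Mei et al., by using the relative noise bound instead of an absolute one.
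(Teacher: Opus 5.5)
Part 1 is correct and matches what the paper calls obvious. Your noise estimate and your treatment of the bounded baseline are also fine. For Part 2 the paper proves nothing itself. It checks that $\Rcal_t$ is bounded, that there are no ties, and (via Lemma~\ref{lemma:unbiased_grad}) that the update is a stochastic softmax policy gradient, and then invokes Theorem 5.1 of \cite{mei2023stochastic}.

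Your self-contained reconstruction has a genuine gap in the passage (i)$\to$(iv). With a \emph{uniform} Lipschitz constant for $\nabla\Lcal_r$, the second-order term is $C\rho^2(1-\max_a\Pi_t(a))$. Near the optimum this is of order $\rho^2\delta_t$, where $\delta_t:=\Lcal_r(H_t)-q_{a^\star}$. It cannot be improved within that bound: with probability about $1-\Pi_t(a^\star)$ a suboptimal arm is drawn and $H$ jumps by order $\rho$. Your \L{}ojasiewicz inequality, by contrast, only yields a descent of order $\rho c^2\delta_t^2$.

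Hence, for every fixed $\rho>0$, your right-hand side exceeds $\delta_t$ as soon as $\delta_t\lesssim\rho/c^2$, which is precisely the regime that matters. The claim that ``the remaining slack is summable'' amounts to $\sum_t\delta_t<\infty$. This is circular, and in fact false: heuristically $\delta_t$ decays only like $1/t$ here, and for exact softmax gradients the $\Omega(1/t)$ rate is proved. Robbins--Siegmund therefore only gives that $\delta_t$ reaches an $O(\rho)$-neighbourhood of $0$.

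The missing idea is Mei et al.'s \emph{non-uniform smoothness}, $\|\nabla^2\Lcal_r(H)\|\le 3\|\nabla\Lcal_r(H)\|$. It transfers to the whole segment $[H_t,H_{t+1}]$ because one step moves $H$ by $O(\rho)$. You also need the growth bound $\E[\|g_t\|^2|\Fcal_t]\le C(1-\max_a\Pi_t(a))\le C'\|\nabla\Lcal_r(H_t)\|$. It follows from $\|\nabla\Lcal_r(H)\|\ge\frac{\Delta}{2\sqrt{k}}(1-\max_a\Pi_H(a))$ with $\Delta:=\min_{a\ne b}|q_a-q_b|$, and this is where \emph{all} pairwise distinctions are used. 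Combining the two, the second-order term becomes $O(\rho^2\|\nabla\Lcal_r(H_t)\|^2)$. For $\rho$ below an explicit threshold, $\Lcal_r(H_t)$ is then a genuine supermartingale with $\sum_t\|\nabla\Lcal_r(H_t)\|^2<\infty$, and your step (iv) closes.

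Step (iii) is not yet an argument either, for three reasons.
\begin{enumerate}
\item The negation of $\inf_t\Pi_t(a^\star)>0$ is $\liminf_t\Pi_t(a^\star)=0$, not $\Pi_t(a^\star)\to0$. Only after a corrected step (i) do you know that $\Pi_t$ converges to a single vertex, which reduces the task to excluding a suboptimal vertex.
\item On that event nothing forces $\sum_t(1-\max_a\Pi_t(a))<\infty$, so the quadratic variation is not controlled.
\item The sign claim fails. Near a suboptimal vertex $b$ the drift of $H_t(b)-H_t(a^\star)$ is, to leading order, $\rho\bigl[\sum_{c\ne b,a^\star}\Pi_t(c)(q_c-q_b)-2\Pi_t(a^\star)(q_b-q_{a^\star})\bigr]$. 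This is positive whenever arms worse than $b$ carry more of the residual mass than $a^\star$.
\end{enumerate}
Mei et al.\ need a separate ingredient here, which they call ``weak exploration''. The stochastic updates keep each $\Pi_t(a)$ from decaying faster than $O(1/t)$, so every arm is sampled infinitely often almost surely.

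Finally, Theorem 5.1 of Mei et al.\ is stated for one specific small constant step size depending on $\Delta$, $k$ and $R_{max}$, and without a baseline. So along this route you cannot drop the smallness condition on $\rho$ ``exactly as in Mei et al.''. The paper's own citation carries the same caveat.
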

\begin{proof}
The first item is an obvious consequence of assumption \eqref{eq:different_qa_qb_hyp}.
For the second item we invoke a recent result by Mei et al., in \cite{mei2023stochastic}. With their definition we denote $q_\star(a) = -(\lambda_\sigma {\sigma}_a^2 + \lambda_\mu {\mu}_a)$, for $a\le k$. Note that the sample $\Rcal_t$ is bounded because all $R(a)$ are bounded. Also note that from \Cref{lemma:unbiased_grad} the \Cref{alg:risk-aware-mab} is a softmax policy gradient in the notations of  Mei et al., \cite{mei2023stochastic}. Then using their Theorem 5.1 we obtain the conclusion.
\end{proof}

\begin{remark}
The boundedness assumption may seem somehow strong and indeed it is. However in practice this is not an issue because in all reasonable applications excessively high values of the reward will cause instabilities in the model and are not realistic. It can also be proved that, even if distributions $R(a)$ are not bounded but satisfy  \Cref{eq:different_qa_qb_hyp} there exists a $R_{max}$ such that truncating the distributions $R(a)$ to $R_{max}$ will conserve both the relation \eqref{eq:different_qa_qb_hyp} and the optimal $a^\star$; it is enough then to run the algorithm by truncating the rewards to $R_{max}$ to obtain convergence to the optimal solution. On the other hand, for applications where large values of the reward are critical (heavy-tailed distributions) this assumption may be numerically fragile and it would be desirable to explore alternative proofs, which we leave as future work.
\end{remark}

\subsection{Variance minimization version}\label{sec:theoretical_var}

As a consequence of \Cref{prop:cv_risk} we obtain:
\begin{proposition}
Assume \eqref{eq:Ra_bounded} and that
\begin{align}
    \forall a\neq b, a,b \le k \ : 
    {\sigma}_b\neq {\sigma}_a.
    \label{eq:different_sigmaa_sigmab_hyp}
\end{align}
Then convergence \eqref{eq:convergence_risk} also holds for \Cref{alg:variance-mab} where $\Pi^\star$ is the Dirac mass supported in the $a^\star$ that minimizes the variances of $R(a)$.
\end{proposition}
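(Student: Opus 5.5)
The plan is to reduce \Cref{alg:variance-mab} to the special case $\lambda_\mu=0$, $\lambda_\sigma=1$, $\ell=2$ of \Cref{alg:risk-aware-mab}, and then to apply \Cref{prop:cv_risk}.

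\textbf{Step 1: the composite rewards coincide.} For $\ell=2$, with $R_{t,1}=R_t$ and $R_{t,2}=R'_t$, the empirical mean is $\hat\mu_t=(R_t+R'_t)/2$. Each residual $R_{t,j}-\hat\mu_t$ equals $\pm (R_t-R'_t)/2$. Hence
\begin{align*}
\hat\sigma_t^2=\frac{1}{2-1}\cdot 2\cdot\frac{(R_t-R'_t)^2}{4}=\frac{1}{2}(R_t-R'_t)^2,
\end{align*}
which is exactly \eqref{eq:rcal_reward_variance}. The baseline update \eqref{eq:baseline_var} is identical to \eqref{eq:baseline}, and the preference updates are the same formulas \eqref{eq:update_Ht_pg}--\eqref{eq:def_gt}. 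Step 5 of \Cref{alg:variance-mab} performs the $H$-update before the baseline update, but both use $\bar\Rcal_t$. So the two algorithms generate the same sequence $(H_t)$ pathwise, given the same random draws.

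\textbf{Step 2: check the hypotheses of \Cref{prop:cv_risk}.} With $\lambda_\mu=0$ and $\lambda_\sigma=1$, the quantity $\lambda_\sigma\sigma_a^2+\lambda_\mu\mu_a$ equals $\sigma_a^2$. Since $\sigma_a\ge 0$, the map $\sigma\mapsto\sigma^2$ is injective on $[0,\infty)$. Therefore \eqref{eq:different_sigmaa_sigmab_hyp} implies \eqref{eq:different_qa_qb_hyp}. The minimizer $a^\star$ of $\lambda_\sigma\sigma_b^2+\lambda_\mu\mu_b$ is then precisely the arm of smallest variance. Boundedness \eqref{eq:Ra_bounded} is assumed directly, and it gives $0\le\Rcal_t\le 2R_{max}^2$. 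Finally, \Cref{lemma:unbiased_grad} already shows that $g_t$ is an unbiased estimate of $\nabla_H\Lcal$ for \Cref{alg:variance-mab}. \Cref{prop:cv_risk} (item 2) therefore yields $\Pi_{H_t}\to\Pi^\star$ almost surely.

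\textbf{Main obstacle.} The only delicate point is Step 1's identification. One must verify that the unbiased sample variance for $\ell=2$ indeed equals $\tfrac12(R_t-R_t')^2$ with the $1/(\ell-1)$ normalization. One must also check that the learning rate is the constant $\rho$ required by \Cref{prop:cv_risk}, so this assumption is implicitly inherited. Everything else is bookkeeping of hypotheses.
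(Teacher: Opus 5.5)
Your proposal is correct and follows the same route as the paper, whose proof is simply the specialization of \Cref{prop:cv_risk} to $\lambda_\mu=0$, $\lambda_\sigma=1$, $\ell=2$. Your explicit checks fill in details the paper leaves implicit: that $\hat\sigma_t^2=\tfrac12(R_t-R_t')^2$ for $\ell=2$, that the order of the baseline and preference updates is harmless since both use $\bar\Rcal_t$, that distinct $\sigma_a\ge 0$ give distinct $\sigma_a^2$, and that the constant-step assumption is inherited.
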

\begin{proof}
The proof is a specialization of the previous result for the case $\lambda_\mu=0$, $\lambda_\sigma=1$, $\ell=2$.    
\end{proof}

\section{Numerical simulations} \label{sec:numerical}

The Python implementation is available on Github~\footnote{\scriptsize\url{https://github.com/gabriel-turinici/min_variance_and_risk_averse_softmax_MAB} version December 21st 2025.}.

\subsection{Toy problem: 2 and 10 arms, centered, well separated}
\label{sec:toy_2arms}

We first test a toy problem: $k=2$ arms, reward distributions centered, one of unit variance and the other variance $2$: $\mu=(0,0)$, $\sigma=(1,2)$. We run for $200$ time steps, $\rho_t=0.5$ and take $1000$ realizations. The results are given in \Cref{fig:toy2arms_minvar} and show good convergence of the regret (to zero) and the identification of the optimal arm (frequency of the optimal arm goes to $100\%$).

\begin{figure*}
\begin{center}
\includegraphics[width=.49\linewidth]{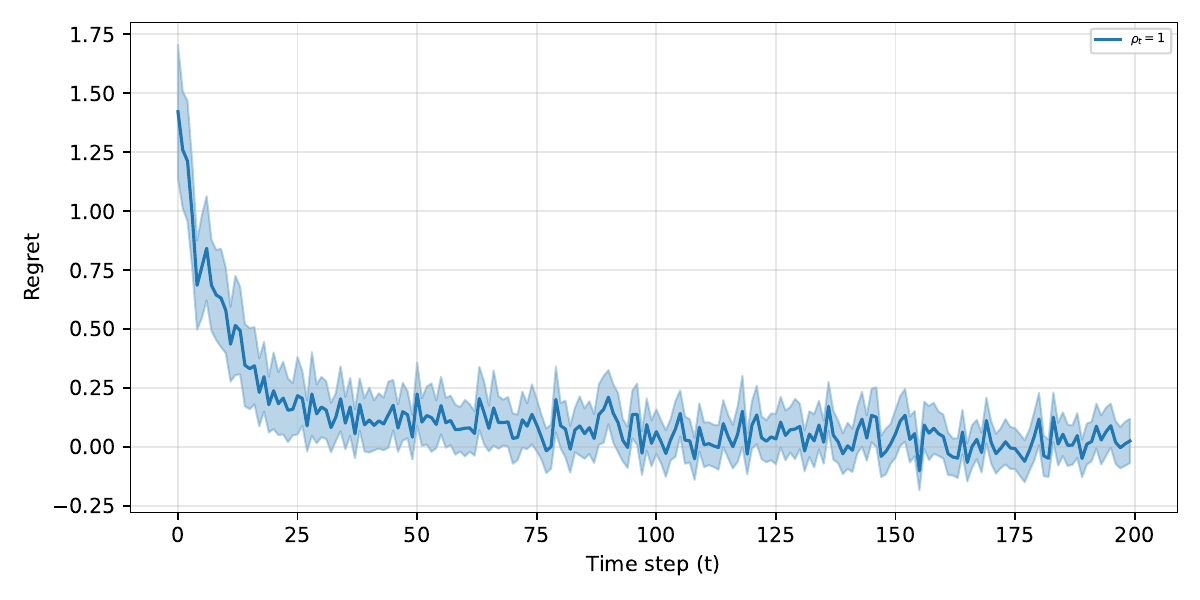}
\includegraphics[width=.49\linewidth]{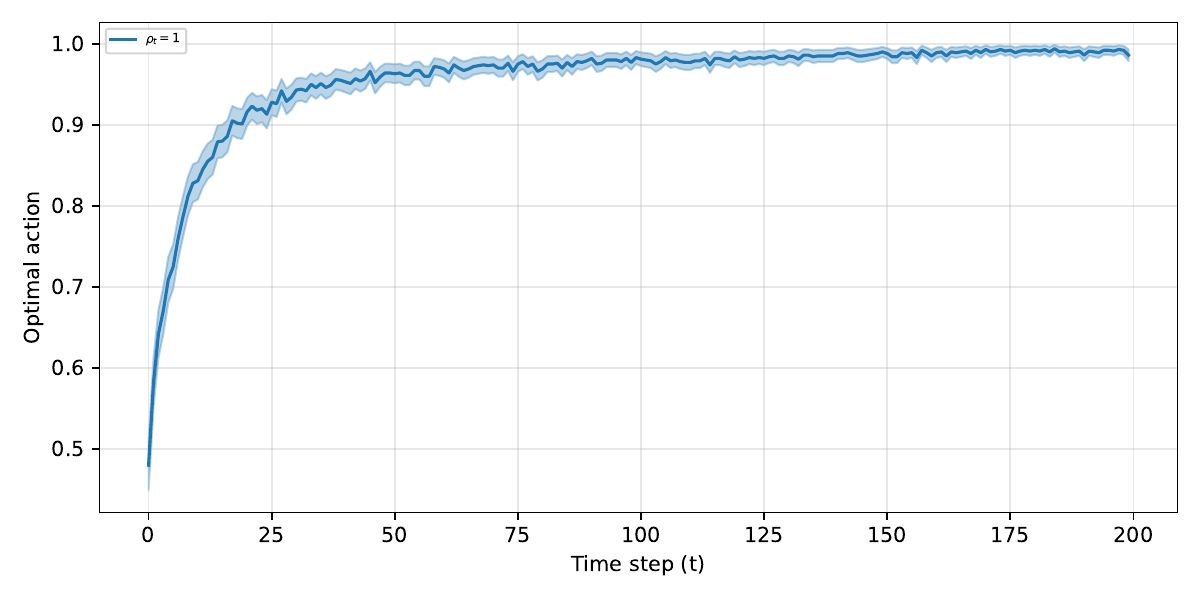}
\caption{
The toy example in \Cref{sec:toy_2arms} for $k=2$ arms.
The average regret (left plot) and average optimal action frequency (right plot); we also display the 95\% Confidence interval for each. Run details: 
 $H_1=(0,0)$ (uniform), learning rate $\rho_t=0.5$, $200$ time steps.
}  \label{fig:toy2arms_minvar}
\end{center}
\end{figure*}

We tested next a similar case with only difference that there are $k=10$ arms, all centered with $\sigma_a=2$ $\forall a < k$ except for the last arm where $\sigma_{k}=1$.
The results are given in \Cref{fig:toy10arms_minvar}; again, we have good convergence of the regret (to zero) and the identification of the optimal arm (likeliness of selecting the optimal arm goes to $100\%$). Note that this case does not strictly satisfy assumption \eqref{eq:different_sigmaa_sigmab_hyp} but in fact it is important that the best arm be separated from the others, the non-optimal arms can have similar variances.

\begin{figure*}
\begin{center}
\includegraphics[width=.45\linewidth]{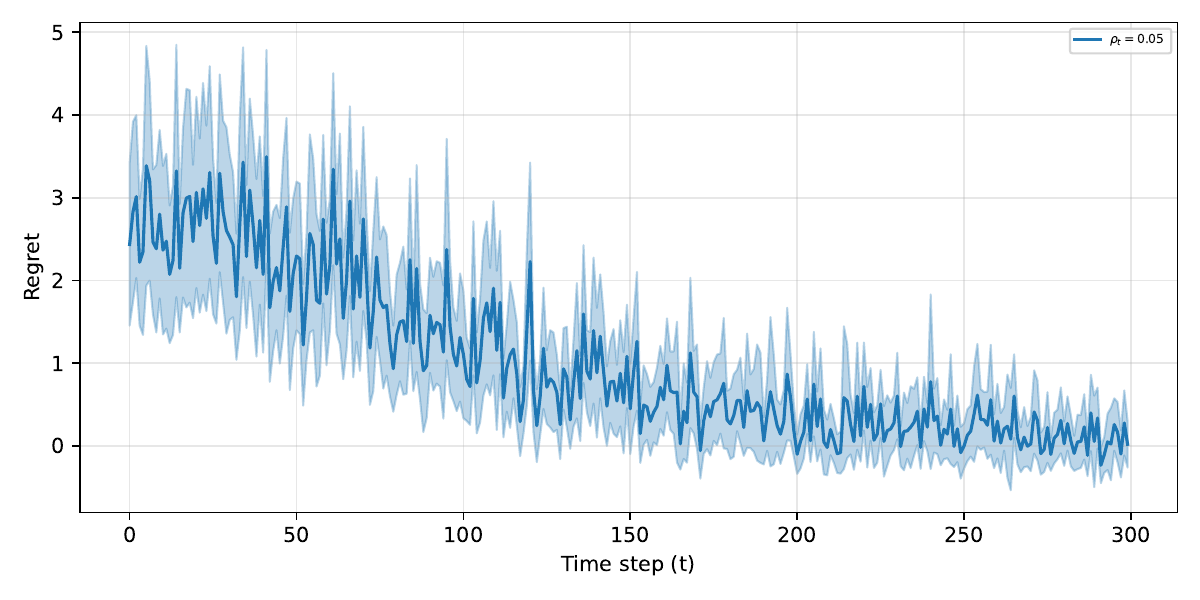}
\includegraphics[width=.45\linewidth]{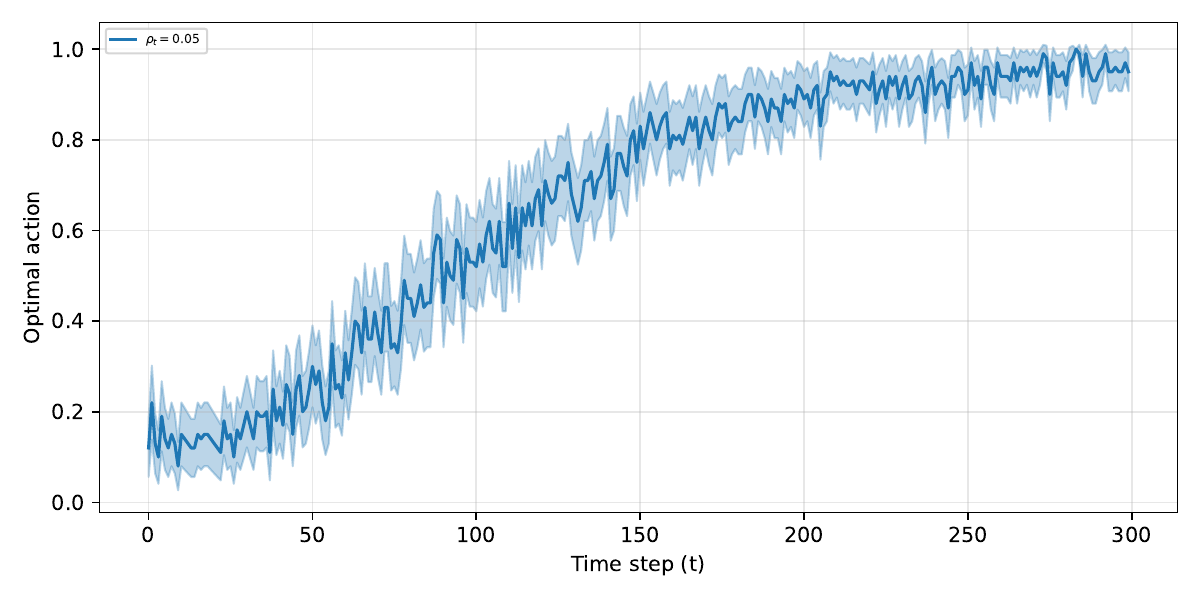}
\caption{
The toy example in \Cref{sec:toy_2arms} for $k=10$ arms.
The average regret (left plot) and average optimal action frequency (right plot); we also display the 95\% Confidence interval for each. Run details: 
 $H_1=(0,0)$ (uniform), learning rate $\rho_t=0.05$,  $300$ time steps.
}  \label{fig:toy10arms_minvar}
\end{center}
\end{figure*}

\subsection{Difficult setting: 10 arms, random separation}
\label{sec:difficult10}

We consider now a more difficult case where $k=10$ but the separation between the optimal arm and the others can be very small; we perform 
$M=1000$ tests of $2000$ steps each; for each of the $M$ tests we sample, as in \cite[fig 2.5 page 38]{sutton_reinforcement_2018}
$k=10$ arms with 
$\mu_a$, $a=1, ..., k$ independent and normally distributed with mean $4$; the variances $\sigma_a^2$ are sampled uniformly in the interval $[1,5]$. Note that there is nothing that prevents two variances to be very close and in fact this will happen often enough.

The average regret over the $M=1000$ runs are presented in 
\Cref{fig:regret_rho01u_minvar} and the average optimal action frequency in
\Cref{fig:regret_rho01u_optact}. The average regret appears very satisfactory and the optimal action frequency above 70\% which is a very good figure; going to 100\% is not realistic because in many cases the difference between the lowest variance arm and the second lowest is tiny and it is not possible to reliable identify the lowest one with enough confidence; however this has no impact on the optimality of the reward, cf. the good quality of the regret figures.

\begin{figure*}
\begin{center}
\includegraphics[width=.75\linewidth]{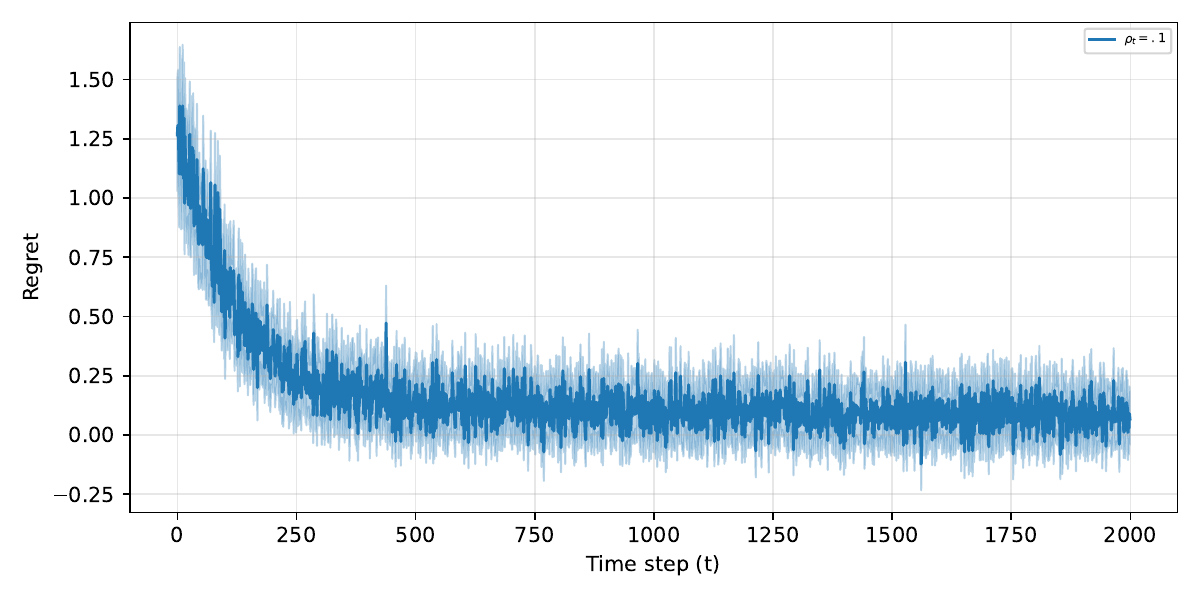}
\caption{ Test case in \Cref{sec:difficult10}, 
the average regret and its 95\% Confidence interval;
 $H_1=(0,...,0)$ (uniform) and learning rate $\rho_t=0.1$.
}  \label{fig:regret_rho01u_minvar}
\end{center}
\end{figure*}

\begin{figure*}
\begin{center}
\includegraphics[width=.75\linewidth]{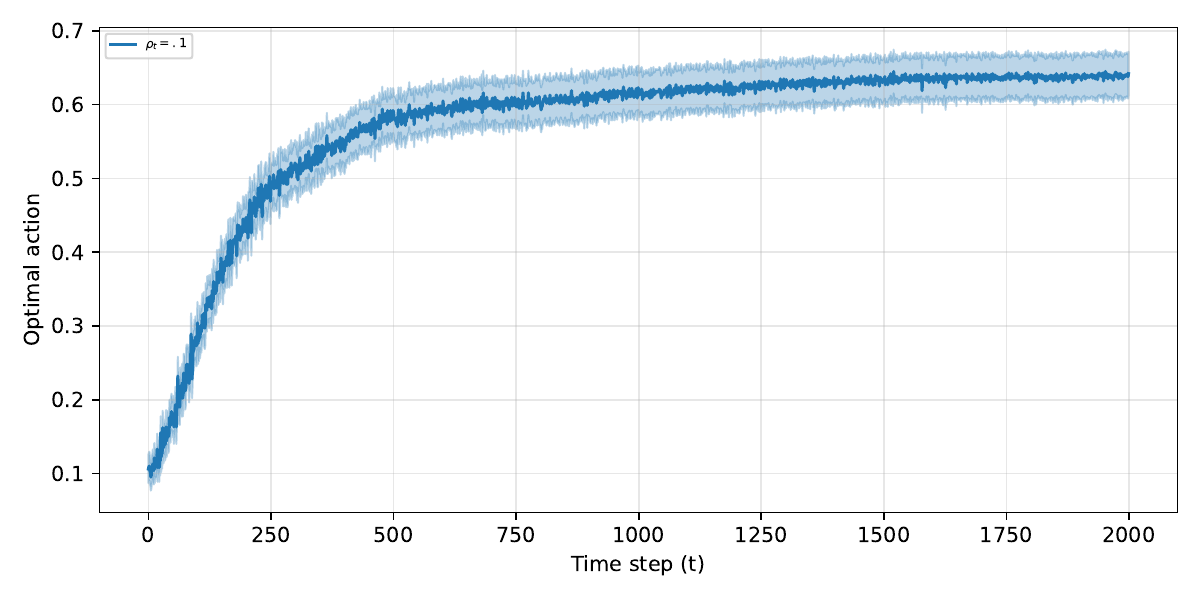}
\caption{ Test case in \Cref{sec:difficult10}, 
the average number of optimal arm selection and its 95\% Confidence interval;
 $H_1=(0,...,0)$ (uniform) and learning rate $\rho_t=0.1$.
}  \label{fig:regret_rho01u_optact}
\end{center}
\end{figure*}

\section{Summary and discussion} \label{sec:conclusion}

We consider a risk-aware formulation of the multi-armed bandit problem where the goal is to find the minimal variance arm and not the maximal reward as the classical objective. This goal is more difficult because errors in tracking the averages may result in large errors in the variance estimate. 
By adopting a softmax policy parameterization, we developed an algorithm that constructs an unbiased estimator of the variance through paired sampling and we established its convergence under mild assumptions. Empirical evaluations demonstrate the practical effectiveness of the proposed method. Beyond variance minimization, the framework accommodates broader risk-sensitive objectives, enabling a principled balance between reward and stability in sequential decision-making.

Several directions merit further investigation. First, extending the proposed approach to contextual bandits, time-dependent bandits and reinforcement learning settings would broaden its applicability to dynamic environments. Second, exploring va\-ri\-ance reduction techniques may improve sample efficiency in high-dimensional action spaces. Finally, theoretical analysis of convergence for variance-reduction settings could provide a more rigorous understanding of algorithmic stability and provide performance guarantees.

\bibliographystyle{splncs04}
\bibliography{refs}

\end{document}